\documentclass[conference,12pt,onecolumn]{ieeeconf}

\IEEEoverridecommandlockouts                              % This command is only needed if 
                                                          % you want to use the \thanks command

\overrideIEEEmargins                                      % Needed to meet printer requirements.

\usepackage[utf8]{inputenc} % allow utf-8 input
\usepackage[T1]{fontenc}    % use 8-bit T1 fonts
\usepackage[hidelinks]{hyperref} % remove hyperlink(reference) color
\usepackage{url}            % simple URL typesetting
\usepackage{booktabs}       % professional-quality tables
\usepackage{amsfonts}       % blackboard math symbols
\usepackage{nicefrac}       % compact symbols for 1/2, etc.
\usepackage{microtype}      % microtypography
\usepackage{lipsum}
\usepackage{graphicx}
\usepackage{comment}
\usepackage{cite}
\usepackage{caption}
\usepackage{subcaption}	
\graphicspath{ {./images/} }
\usepackage{svg}
\usepackage{colortbl}
\usepackage{algorithm}
\usepackage{algorithmic}

\usepackage{amsmath}
\usepackage{cleveref}
% \RestyleAlgo{ruled}
\usepackage{mathrsfs}
% For theorems and proofs
% \newtheorem{theorem}{Theorem}
\usepackage[english]{babel}
\usepackage{amssymb}

\newtheorem{theorem}{Theorem}

\newtheorem{remark}{Remark}
% \usepackage[shortlabels]{enumitem}

% \usepackage{tikz}
% \usetikzlibrary{matrix}
\def\R{\mathbb{R}}

\def\S{\mathcal{S}}
\def\G{\mathcal{G}}
\def\T{\mathcal{T}}
\def\U{\mathcal{U}}
\def\ra{\rightarrow}
\def\E{\mathbb{E}}
% \usepackage{textcomp}
% \def\BibTeX{{\rm B\kern-.05em{\sc i\kern-.025em b}\kern-.08em
%     T\kern-.1667em\lower.7ex\hbox{E}\kern-.125emX}}
% \setcounter{page}{1}
%% \setcounter{secnumdepth}{0}
% \usepackage[a4paper,bindingoffset=0.1in, left=0.5in,right=0.5in,top=0.55in,bottom=0.55in, footskip=0.25in]{geometry}

% correct bad hyphenation here
% \hyphenation{op-tical net-works semi-conduc-tor}

\begin{document}
\title{Approximate Linear Programming for Decentralized  Policy Iteration in Cooperative Multi-agent Markov Decision Processes}
% \title{Impact of Actor and Critic Network on the stability and fastness of Actor-Critic RL Algorithm}
% \vspace{-15pt}
\author{Lakshmi Mandal$^1$\thanks{$^1$Department of Computer Science and Automation, Indian Institute of Science, Bangalore 560012. E-mail: \{lmandal,shalabh\}@iisc.ac.in}, Chandrashekar Lakshminarayanan$^2$\thanks{$^2$Department of Computer Science and Engineering, Indian Institute of Technology Madras, Chennai 600036. E-mail: chandrashekar@cse.iitm.ac.in}, and Shalabh Bhatnagar$^1$\thanks{$^\star$S.Bhatnagar was supported in this work through a J.~C.~Bose Fellowship of SERB, Project No.~DFTM/ 02/ 3125/M/04/AIR-04 from DRDO under DIA-RCOE and the Walmart Centre for Tech Excellence, IISc.}}%         John~Doe,~\IEEEmembership{Fellow,~OSA,}
%         and~Jane~Doe,~\IEEEmembership{Life~Fellow,~IEEE}% <-this % stops a space
% \thanks{M. Shell was with the Department
% of Electrical and Computer Engineering, Georgia Institute of Technology, Atlanta,
% GA, 30332 USA e-mail: (see http://www.michaelshell.org/contact.html).}% <-this % stops a space
% \thanks{J. Doe and J. Doe are with Anonymous University.}% <-this % stops a space
% \thanks{Manuscript received April 19, 2005; revised August 26, 2015.}}
  
% The paper headers
%\markboth{Journal of \LaTeX\ Class Files,~Vol.~14, No.~8, August~2015}%
%{Shell \MakeLowercase{\textit{et al.}}: }

% make the title area
\maketitle

% As a general rule, do not put math, special symbols or citations
% in the abstract or keywords.
% \begin{abstract}
% The abstract goes here.
% \end{abstract}
\begin{abstract} 
In this work, we consider a ‘cooperative’ multi-agent Markov decision process (MDP) involving $m (>1)$ agents. At each decision epoch, all the $m$ agents independently select actions in order to maximize a common long-term objective. In the policy iteration process of multi-agent setup, the number of actions grows exponentially with the number of agents, incurring huge computational costs. Thus, recent works consider decentralized policy improvement, where each agent improves its decisions unilaterally, assuming that the decisions of the other agents are fixed. However, exact value functions are considered in the literature, which is computationally expensive for a large number of agents with high dimensional state-action space. Thus, we propose approximate decentralized policy iteration algorithms, using approximate linear programming with function approximation to compute the approximate value function for decentralized policy improvement. Further, we consider (both) cooperative multi-agent finite and infinite horizon discounted MDPs and propose suitable algorithms in each case. Moreover, we provide theoretical guarantees for our algorithms and also demonstrate their advantages over existing state-of-the-art algorithms in the literature.
\end{abstract}

% Note that keywords are not normally used for peerreview papers.
% \begin{IEEEkeywords}
% Markov Decision Process, Cooperative Multi-Agent Systems, Approximate Linear Programming, Policy Improvement.
% \end{IEEEkeywords}

\IEEEpeerreviewmaketitle

\section{Introduction}
\label{introduction}
%MDP -> Multi-agent MDP-> policy iteration in MDP -> decentralized policy improvement and centralized policy evaluation -> exact and approximate policy evaluation ->use of LP in approximate policy evaluation 

Markov Decision Process (MDP) provides a mathematical framework for modeling problems involving sequential decision-making under uncertainty arising in fields such as artificial intelligence, finance, management, manufacturing and adaptive control \cite{bertsekas2017}. Here, the goal is to choose actions in response to the states of the system so as to maximize a certain long-term objective. Based on the number of decision-makers (also known as agents) being one or more, MDPs can be classified as `single-agent' MDP (SA-MDP) or `multi-agent' MDP (MA-MDP). %MA-MDP naturally arises in real-world applications such as product delivery, vehicle routing, real-time games, and fire and emergency response \cite{Proper2009}. 
Based on the objective of the individual agents, MA-MDP can be further classified as `competitive' or `cooperative'. In competitive MA-MDP, the $m$ agents ($m>1$) in general have competing objectives that can often be in conflict with those of other agents \cite{Sayin_2021, ramponi2021learning}.
In cooperative MA-MDP, all agents cooperate in order to maximize a common objective function \cite{correa2021cooperative,rizk2018decision}.

%In an MA-MDP setting, the various agents make their decisions based on a policy that they follow and which maps states to actions. The values (under a policy) for all the states are collectively given by the value function. 
Dynamic programming (DP) methods, namely value iteration, policy iteration, and linear programming, are three basic numerical solution methods for MDPs \cite{Sutton2018, Li2022}. These are exact and full-state methods to find the optimal value and policy functions. However, MDPs arising in practice have a large number of states, and by the curse-of-dimensionality (COD), the number of states grows exponentially in the number of state variables. Thus, DP methods are impractical in the setting of MA-MDP with high-dimensional state space due to the excessive computational costs.
Approximate dynamic programming (ADP) methods \cite{powell2007approximate, wang2020mdp} overcome COD by approximating value function via a parameterized family of functions and finding the optimal parameter whose dimension is much smaller in comparison to the number of states.

Policy improvement is a key step in both exact as well as approximate DP methods and maps a candidate value function to its corresponding `one-step' greedy policy. In the case of MA-MDP, an important issue (in addition to COD) is that the overall action is a Cartesian product of the actions of the individual agents and hence the number of actions also grows exponentially in the number of agents \cite{Campbell2013}. Thus even if the number of states is small, policy improvement could be a computationally expensive operation due to the large number of actions. To address this issue, decentralized policy improvement is proposed in recent literature \cite{Bertsekas2021}, \cite{bertsekas2020}[Chap. 5]. In these works, the exact value function is centrally calculated, while policy improvement is decentralized, whereby each agent independently improves its own policy, assuming policy of the other agents is fixed. However, in the MA-MDP settings, computing the exact value function centrally is computationally expensive. In \cite{Chen2022} fully decentralized multi-agent algorithm, with function approximation, on infinite horizon discounted MDP is considered. 
In the SA-MDP setting, approximate linear programming (ALP) approaches are well studied in the literature \cite{CLSBCS}.
%One of the key challenges in LP-based techniques is the large number of constraints. This problem is tackled in \cite{CLSBCS} through reduction via a linearly relaxed ALP with a lower number of constraints obtained from taking a positive linear combination of the original constraints.
However, in MA-MDP, the ALP-based approach to approximate decentralized policy iteration is not considered in the literature. 

Thus, we propose an ALP-based method for decentralized policy iteration (called approximate decentralized policy iteration (ADPI)) in the cooperative MA-MDP setting. 
The following are the broad contributions of our work:\\
% \begin{enumerate}
\textbf{(1)} We propose approximate decentralized policy iteration (ADPI) algorithms for MA-MDPs that can work under large state-action settings under (i) finite horizon and (ii) infinite horizon MDPs. Further, unlike \cite{Bertsekas2021, bertsekas2020}, our proposed algorithms compute the approximate value function, resulting in computational savings.\\
\textbf{(2)} We provide a detailed analysis and theoretical guarantees for the proposed ADPI algorithms.\\
\textbf{(3)} We empirically demonstrate the effectiveness of our
proposed algorithms on standard multi-agent tasks where even with more number of agents and complex state-action spaces, our algorithms outperform   
 \cite{Bertsekas2021,Chen2022}.
The rest of the paper is organized as follows. Section \ref{problem_formulation} describes the Background. Section \ref{sec:proposedAlgo} presents the proposed algorithms as well as the theoretical analysis. The experimental setting and numerical results are provided in section \ref{experiments_results}. Finally, Section \ref{conclusions} provides the concluding remarks.

\section{Background}
\label{problem_formulation}
\subsection{Single Agent Markov Decision Process}
\label{subsec:RL}
Markov Decision Process (MDP) is a mathematical framework to cast problems involving sequential decision making under uncertainty. Formally, an MDP is given by the $4$-tuple $\langle \S, \U, g, P\rangle$, where $\S$ is the state space, $\U$ is the action space, $g$ is the single-stage cost and $P$ denotes the transition probability kernel. In this paper, we consider MDP with finite state-action space. Here, for $x,y\in \S$ and $u\in \U(x)$, $p_{xy}(u)$ is the probability of transitioning from state $x$ to $y$ while choosing action $u$ and $g_{xy}(u)$ denotes the associated cost.

%Markov Decision Process (MDP) is a paradigm in which an agent operating in a dynamic environment learns the policy or best action sequence to maximize its long-term reward. The interaction between the agent and the environment is characterized by four tuples $<S; A; P; R>$, where, $S$, $A$, $P$, and $R$ represent state space, action space, probability transition matrix, and reward respectively. In an MDP $A$ and $U$ can be used interchangeably to represent action space or control space. The objective of an MDP is to find an optimal policy.

\textbf{Policy and Value:} The behaviour of the agent is formally described by a `policy' denoted by $\pi=\{ \mu_0,\mu_1,\dots,\mu_k,\dots \} $, where, $\mu_k :\S \rightarrow \U$, $k =0,1,\dots$, are mappings from the state space $\S$ to the action space $\U$. The value function $J_\pi\colon\S\ra\R$ captures the long-term cumulative objective obtained by the agent acting according to a policy $\pi$ starting from each state. In this paper, we consider two kinds of long-term objectives: $(i)$ finite horizon total cost (FHTC)  and $(ii)$ infinite horizon discounted cost (IHDC), whose value functions are as follows.
\vspace{-10pt}
\begin{align}
    (\text{FHTC}):J_{\pi}(x) = \E [\sum_{k=0}^{N} g_{x_kx_{k+1}}(\mu_k(x_{k})) |x_0=x],\\
   (\text{IHDC}):J_{\pi}(x) = \E[ \sum_{k=0}^{\infty} \alpha^k g_{x_kx_{k+1}}(\mu_k(x_{k}))|x_0=x],
 \vspace{-10pt}  
\end{align}
where $\alpha \in(0,1)$ is the discount factor. In what follows, in this section, we briefly discuss the basic ideas using the IHDC setting (those related to the FHTC setting are similar and can be found in \cite{bertsekas2017}). 

\textbf{Policy Improvement (PI):} Given an MDP, we are interested in finding the optimal policy $\pi_*$ and its associated optimal value function $J_*$. The `one-step' greedy operator also known as the \emph{Bellman Operator} plays a central role in the solution methods for obtaining the optimal quantities. In the IHDC case, for a given $J\colon \S\ra\R$, the Bellman operator and its associated `one-step' policy improvement operator are, respectively, as follows.
\vspace{-5pt}
\begin{align}
    \label{Bellman_T}
    (\T J)(x) = \min_{u\in \U(x)}\sum_{y}p_{xy}(u)\left[g_{xy}(u)+\alpha J(y)\right],\\
    \label{PI}(\G J)(x) = \arg\min_{u\in \U(x)}\sum_{y}p_{xy}(u)\left[g_{xy}(u)+\alpha J(y)\right].
\end{align}
Further, for the IHDC case, for a given policy $\mu$, $J\colon \S\ra\R$, the Bellman operator $\T_\mu$ is defined as follows:
\vspace{-8pt}
\begin{align}
\vspace{-10pt}
    \label{Bellman_T_mu}
    (\T_{\mu} J)(x) = \sum_{y}p_{xy}(\mu(x))\left[g_{xy}(\mu(x))+\alpha J(y)\right].
    % \vspace{-10pt}
\end{align}
% \vspace{-10pt}
\textbf{Dynamic Programming (DP) Approaches:} The basic numerical solution approaches to calculate $\pi_*$ and $J_*$ are value iteration, policy iteration and linear programming. The first two are well-studied approaches, and most RL algorithms are based on one of these. %Value iteration solves for the Bellman equation given by $J_*= \T J_*$ and obtains the optimal policy as $\pi_*= \G J_*$. Policy iteration starts with an initial policy $\pi_0$, and at the $k^{\text{th}}$ iteration computes the value function $J_{\pi_k}$ of the $k^{\text{th}}$ policy update $\pi_k$ and obtains an improved policy $\pi_{k+1} = \G J_{\pi_k}$. 
Linear programming casts the Bellman equation $J_*= \T J*$ as
%\begin{subequations}
\vspace{-8pt}
\begin{align*}
\vspace{-8pt}
    &J_* = \arg\max_{J\in\R^{|\S|}} c^\top J\\
          &s.t\quad J(x) \leq \sum_{y}p_{xy}(u)\left[g_{xy}(u)+\alpha J(y)\right], \forall x\in \S, u\in \U(x),
          \vspace{-15pt}
\end{align*}
% \vspace{-5pt}
%\end{subequations}
and obtains the optimal policy as $\pi_*= \G J_*$.

\textbf{Approximate Dynamic Programming (ADP) Methods:} %\emph{curse-of-dimensionality}, i.e., the fact that the number of states grows exponentially in the number of state variables, 
Most MDPs arising in practical applications have a large number of states. Hence, it is computationally expensive to calculate $J_*$ and $\pi_*$, and DP-based approaches cannot be directly used due to the COD problem. ADP methods handle the issue of large number of states by introducing function approximation, i.e., they approximate the value function $J_{\pi}\approx \Phi r$, where $\Phi \in  \R^{|\S|\times d}$ is a \emph{feature matrix} and $r\in\R^d$ is a parameter (with $d<<|\S|$). Approximate value iteration, approximate policy iteration, and approximate linear programming are some of the widely studied ADP methods.

\subsection{Multi-Agent Markov Decision Process}
In this paper, we consider the `cooperative' multi-agent MDP (CO-MA-MDP) with `$m$' agents, where at each stage, the action is given by 
$u = (u^1,\dots,u^m)$, with $u^i$ being the action of agent $i$. Hence, the overall action space is given by $\U = \U^1 \times \U^2 \times \dots \times \U^m$, where $\U^i$ is the action space of agent $i$ and, $u^i\in \U^i(x), x \in S$. As a result, the size of the action space grows exponentially in the number of agents. To see its impact, note that while the underlying equations are still \eqref{Bellman_T}-\eqref{Bellman_T_mu} in the case of the CO-MA-MDP, 
%\vspace{-5pt}
% \begin{align}
% \label{Bellman_T_MA}
%     &(\T J)(x) =  \underset{u\in \U(x)}       {\min}\sum_{y}p_{xy}(u)
%    \left[g_{xy}\left(u\right)+\alpha J(y)\right],\\
% \label{PIExpanded}&(\G J)(x) = \underset{(u^1,\ldots,u^m)\in \U^i(x) \times \ldots\times\U^m(x)}       {\arg\min}\sum_{y}p_{xy}\left(u^1,\ldots,u^m\right)\\\nonumber
% %  \end{align} 
%  % \begin{align} 
%     & \left[g_{xy}\left(u^1,\ldots,u^m\right)+\alpha J(y)\right],\\
%         \label{Bellman_T_mu_MA}
%     & (\T_{\mu} J)(x) = \sum_{y}p_{xy}\left(\mu^1(x),\ldots,\mu^m(x) \right) \nonumber \\
%     &\left[g_{xy}\left(\mu^1(x),\ldots,\mu^m(x) \right)
%      + \alpha J(y)\right].
% \end{align}
in the minimization in these equations, the action $u$ now has $m$ components each corresponding to an agent as opposed to a single component in equations \eqref{Bellman_T} and \eqref{PI}. Further, in \eqref{Bellman_T_mu}, the policy $\mu$ in the MA case has $m$ components $\mu=(\mu^1,\ldots,\mu^m)$, with each component corresponding to an agent as opposed to \eqref{Bellman_T_mu}. 
\subsection{Prior Work: Decentralized Policy Iteration}
Due to the fact that the action $u=(u^1,\ldots,u^m)$ has $m$ components, it becomes expensive to exactly compute the minimization in equation \eqref{Bellman_T}-\eqref{PI}. In order to address this issue, \cite{Bertsekas2021} proposed an `agent-by-agent' decentralized policy improvement (DPIm), wherein,  each agent improves its own policy component, assuming that the other agents continue working with their policy components fixed. In particular, when the $i^{\text{th}}$ agent improves its policy, agents $1,2,\dots, i-1$ would have already updated their policies in the current cycle and would be choosing actions according to their updated policies $\tilde{\mu}^{1},\tilde{\mu}^{2},\ldots,\tilde{\mu}^{i-1}$, respectively, whereas agents $i+1,i+2,\dots,m$ would not have updated their policies until then and would continue to pick actions as per the \emph{base} policy namely $\mu^{i+1},\mu^{i+2},\ldots,\mu^{m}$ respectively. The DPIm in \cite{Bertsekas2021} is shown in \Cref{algo:DPI}, where $\tilde{\mu}_{1:i-1}=(\tilde{\mu}^{1},\ldots,\tilde{\mu}^{i-1})$ denotes collectively the updated policies of agents $1,\ldots,i-1$ and $\mu_{i+1:m}=(\mu^{i+1},\ldots,\mu^{m})$ denotes collectively the \emph{base} policy of agents $i+1,\ldots,m$.
As discussed in \cite{Bertsekas2021}, the DPIm in \Cref{algo:DPI} requires a form of \emph{weak} communication between agents, wherein agent $i$ can obtain the knowledge of $\tilde{\mu}^j, j=1,\ldots,i-1$ (to achieve this, it is enough if agent $i-1$ can communicate with agent $i$). However, in the literature, DPIm with exact value function is considered, which is computationally expensive with a large number of agents and/or with large state-action space \cite{wang2020mdp}.
%such information sharing is not allowed in some applications such as the team problem with non-classical information pattern and one cannot successfully apply DPI in such applications.
% \vspace{-5pt}
\begin{algorithm}
\caption{Decentralized Policy Improvement (DPIm): \\DPIm($i, \tilde{\mu}_{1:i-1},\mu_{i+1:m},J, \alpha$)}\label{algo:DPI}  \begin{algorithmic}[1]
    \STATE Input: agent $i$; updated policies $\tilde{\mu}_{1:i-1}$; base policies $\mu_{i+1:m}$; value function $J$
    \STATE $\widetilde{\mu}^i(x) \in {\arg \min}_{u^i \in \U^i(x)} \sum_{y \in S} p_{xy}({\widetilde{\mu}}^1(x),\dots,{\widetilde{\mu}}^{i-1}(x), $
    \\ $u^i, \mu^{i+1}(x),\dots,\mu^m(x))[g_{xy}({\widetilde{\mu}}^1(x),\dots,u^i,\dots,\mu^m(x)) $
    \\ $+\alpha J(y)], \forall x \in S $
    \STATE Return updated policy $\widetilde{\mu}^{i}$ for the agent $i$.
   \end{algorithmic} 
\end{algorithm}
% \vspace{-14pt}
\begin{algorithm}
\caption{Calculate approximate cost function: {CACFN}$(k, \widetilde{\pi},J^{\text{ALP}}_{k+1,\widetilde{\pi}})$}\label{CACFN_ALP_FH}
    \begin{algorithmic}[1]
      \STATE Input: $k,\widetilde{\pi},J^{\text{ALP}}_{k+1,\widetilde{\pi}}$ 
      \STATE \label{cost_fun_FH_ALP} $J_{k,\widetilde{\pi}}^{\text{ALP}}=\underset{r \in \mathbb{R}^{d}}{\max} ~c^\top \Phi r$, $c \in \mathbb{R}^{n}, \Phi \in \mathbb{R}^{n \times d}$
      \STATE \label{constrain_FH_ALP} Subject to: $\Phi r (x) \leq \sum_{{y} \in S} p_{xy}({\widetilde{\mu}_k}^1(x),\dots,{\widetilde{\mu}_k}^m(x))$\\
      $[g_{xy}({\widetilde{\mu}_k}^1(x),
     \dots,{\widetilde{\mu}_k}^{m}(x)) + J_{k+1,\widetilde{\pi}}^{\text{ALP}}(y)], \forall x \in S$
     \STATE Return $J_{k,\widetilde{\pi}}^{\text{ALP}}$
    \end{algorithmic}
\end{algorithm}
% \vspace{-12pt}
\begin{algorithm}
\caption{DPI using ALP for FH CO-MA-MDP}\label{alg:FH_PI_ALP}
\begin{algorithmic}[1]
\STATE \textit{Input:} $N,n,m$\\

$u_k=(u_k^1,u_k^2,\dots,u_k^m) \in \U_k^1(x) \times \U_k^2(x) \times \dots\times \U_k^m(x)$, for state $x \in S$. \\
$\pi=\{ \mu_0,\mu_1,\dots,\mu_k,\dots,\mu_{N-1} \} $,
where $\mu_k = (\mu_k^1,\mu_k^2,\dots,\mu_k^m)$.\\
\STATE Call {CACFN}$(k, {\pi},J^{\text{ALP}}_{k+1,{\pi}})$, i.e., \Cref{CACFN_ALP_FH} to get $J_{k,\pi}^{\text{ALP}}$.
% \vspace{-8pt}
\STATE Terminal cost $J_{N,\widetilde{\pi}}^{\text{ALP}}=J_{N,\pi}^{\text{ALP}}=g_N$.
 
\FOR {stages $k=N-1,N-2,\dots,0$}
    \REPEAT
        \FOR {agent $i=1,2,\dots,m$}
            \STATE Call \Cref{algo:DPI} to get $\widetilde{\mu}_k^i$.
        \ENDFOR
        \STATE Call {CACFN}$(k, \widetilde{\pi},J^{\text{ALP}}_{k+1,\widetilde{\pi}})$, i.e., \Cref{CACFN_ALP_FH} to get $J_{k,\widetilde{\pi}}^{\text{ALP}}$.
        \STATE $\pi=\widetilde{\pi}$
    \UNTIL{$J_{k,\widetilde{\pi}}^{\text{ALP}}(x) \geq J_{k,\pi}^{\text{ALP}}(x), $ $\forall x \in S$}
    \STATE Return optimal policy function for the $k^{\text{th}}$ stage.
\ENDFOR
\end{algorithmic}
\end{algorithm}
% \vspace{-12pt}  
% \vspace{-7pt}

\section{Proposed Algorithms and Theoretical Analysis}
\label{sec:proposedAlgo}
% \vspace{-10pt}

This section describes our proposed methodology, i.e., approximate linear programming (ALP) for decentralized policy iteration (DPI) in cooperative multi-agent MDPs (CO-MA-MDPs). In this work, we consider finite horizon as well as infinite horizon CO-MA-MDPs. 
The proposed algorithms are written in a modular manner for ease of understanding.

% \vspace{-30pt}
\subsection{DPI using ALP in Finite Horizon CO-MA-MDPs}
This section demonstrates the proposed algorithm (\Cref{alg:FH_PI_ALP}) for finite horizon CO-MA-MDPs. In \Cref{alg:FH_PI_ALP}, ALP is implemented in decentralized policy iteration, where approximate policy evaluation is introduced unlike in \cite{Bertsekas2021}. 

The inputs in \Cref{alg:FH_PI_ALP} are the finite horizon length $N$, the number of states $n$ and the number of agents $m$. Further, the state space $S$, action space $\U$ and base policy $\pi$ are also taken as inputs. We denote the action space at the $k^{\text{th}}$ stage as $\U_k=\U_k^1\times \U_k^2 \times \dots\times \U_k^m$, where the action component for the $i^{\text{th}}$ agent is $u^i_k \in \U^i_k$. At first, we call {CACFN}$(k, \pi,J^{\text{ALP}}_{k+1,\pi})$ from \Cref{CACFN_ALP_FH} to get the cost function value (at the $k^{\text{th}}$ stage) $J^{\text{ALP}}_{k,\pi}$. This algorithm takes the current stage number $k$, current policy $\pi$ and the next stage cost $J^{\text{ALP}}_{k+1,\pi}$ as input and returns the cost at the $k^{\text{th}}$ stage. In this algorithm, the approximate cost is calculated using a linear program (see Lines \ref{cost_fun_FH_ALP} and \ref{constrain_FH_ALP} in \Cref{CACFN_ALP_FH}). In other words, we get the approximate cost $J^{\text{ALP}}_{k,\pi}$ as follows:
% \vspace{-10pt}
\begin{align}
\vspace{-15pt}
  & J_{k,{\pi}}^{\text{ALP}} =\underset{r \in \mathbb{R}^{d}}{\max} ~c^\top \Phi r, ~~ c\in \mathbb{R}^{n}, \Phi \in \mathbb{R}^{n \times d} \nonumber\\
    & \text{Subject to:}~ \Phi r (x) \leq \sum_{{y} \in S} p_{xy}({{\mu}_k}^1(x),\dots,{{\mu}_k}^m(x)) \nonumber\\
    &  [g_{xy}({{\mu}_k}^1(x),
     \dots,{{\mu}_k}^{m}(x)) + J_{k+1,{\pi}}^{\text{ALP}}(y)], \forall x \in S.
\end{align}

In this finite horizon MA-MDP algorithm, the terminal cost according to the base policy $\pi$, i.e., $J_{N,{\pi}}^{\text{ALP}}$ and the updated policy $\Tilde{\pi}$, i.e., $J_{N,\widetilde{\pi}}^{\text{ALP}}$ is the single stage cost $g_N$. As this is a finite horizon MDP, we perform decentralized policy iteration (DPI) using dynamic programming going backward in time, i.e., we first calculate the optimal policy for the ${(N-1)}$-stage problem using the fact that the optimal single-stage terminal cost is $g_N$. Next, for the ${(N-2)}$-stage problem, we use the optimal cost obtained for the ${(N-1)}$-stage problem and the process is repeated. At every stage, we repeat the $(i)$ Decentralized policy improvement, and $(ii)$ Approximate policy evaluation step (which is unlike \cite{Bertsekas2021,bertsekas2020}) until the policy converges, i.e., until the cost, according to the updated policy, $J^{\text{ALP}}_{k,\Tilde{\pi}}$ is greater than or equals the cost obtained according to the base policy, $J^{\text{ALP}}_{k,\pi}$. The procedure then returns the optimal policy for the $k^{\text{th}}$ stage. 

In the Decentralized policy improvement step, each agent $i$ updates its own policy component assuming that other agents are working as per their individual optimal policies. Now, $\tilde{\mu}_{1:i-1,k}=(\tilde{\mu}_{k}^{1},\ldots,\tilde{\mu}_{k}^{i-1})$ denotes collectively the updated policies of agents $1,\ldots,i-1$ and $\mu_{i+1:m,k}=(\mu_{k}^{i+1},\ldots,\mu_{k}^{m})$ denotes collectively the \emph{base} policy of agents $i+1,\ldots,m$ at the $k^{\text{th}}$ stage. In this step, for each agent $i$, %DPI ($i,\tilde{\mu}_{1:i-1,k},\mu_{i+1:m,k},J_{k+1,\pi}^{\text{ALP}}, \alpha=1$), i.e.,  
\Cref{algo:DPI} is called, providing agent $i$, the current policy of other agents as mentioned above, and also the next stage cost $J^{\text{ALP}}_{k+1,\pi}$ as input, to obtain  the updated policy component $\widetilde{\mu}_k^i$ of the $i^{\text{th}}$ agent at the $k^{\text{th}}$ stage. In other words, we get the updated policy component of the $i^{\text{th}}$ agent in state $x$ as follows:
\vspace{-8pt}
\begin{align}
\vspace{-8pt}
\nonumber 
&\widetilde{\mu}^i(x) \in \underset{u^i \in \U^i(x)}{\arg \min} \sum_{y \in S} p_{xy}({\widetilde{\mu}}^1(x),\dots,{\widetilde{\mu}}^{i-1}(x), u^i, \dots,  \nonumber \\ 
 & \mu^m(x))[g_{xy}({\widetilde{\mu}}^1(x),\dots,u^i,\dots,\mu^m(x))+ \alpha J_{k+1,\pi}^{\text{ALP}}(y)]
 \label{imp}
\end{align}
% \begin{align}
%     \label{imp}
%     &, \forall x \in S.
%     % \vspace{-10pt}
% \end{align}
% \vspace{-5pt}
Next, the updated policy $\Tilde{\pi}$ is evaluated by calling {CACFN}$(k, \widetilde{\pi},J^{\text{ALP}}_{k+1,\widetilde{\pi}})$, i.e.,  \Cref{CACFN_ALP_FH}, where current stage $k$, updated policy $\Tilde{\pi}$ and next stage cost %according to $\Tilde{\pi}$, 
$J^{\text{ALP}}_{k+1, \Tilde{\pi}}$ are provided as input. Next, the updated policy $\Tilde{\pi}$ is set as base policy $\pi$. After the completion of execution of \Cref{alg:FH_PI_ALP}, we receive the optimal policy and the corresponding optimal cost for all stages.
Now, we introduce \Cref{thm:FH_PI_ALP} to show a cost improvement property for \Cref{alg:FH_PI_ALP}. 

Let $\pi$ be the base policy and $\widetilde{\pi}$ be the corresponding roll-out policy obtained from Algorithm \ref{alg:FH_PI_ALP}.
Let,
\begin{equation}
\label{eq:FH_PI_ALP_1}
J_{k+1,\widetilde{\pi}}(x) \leq J_{k+1,\pi}(x) +\beta_{k+2}, ~\forall ~x \in S ~and ~k,
\end{equation}
with $\beta_{k+2}=\max_{x\in S} |Err_{k+2}(x)|$ as the largest error $Err_{k+2}(x): =J_{k+1, \pi}(x)-J^{\text{ALP}}_{k+1,\pi}(x)$, $x\in S$. Let 
$\beta :=\max\{\beta_N, \beta_{N-1} \dots\beta_{k+1},\dots\beta_1\}$. 
Rewriting (\ref{eq:FH_PI_ALP_1}), we obtain for all $x\in S$, $k=0,1,\ldots,N-1$
\vspace{-7pt}
\begin{eqnarray}
\label{eq:FH_PI_ALP_2}
J_{k+1,\widetilde{\pi}}(x) &\leq& J_{k+1,\pi}(x) +(N-k-1)\beta,\\
\label{eq:FH_PI_ALP_3}
J_{N,\widetilde{\pi}}(x) &=& J_{N,\pi}(x)=g_N(x).
\end{eqnarray}

\begin{theorem}\label{thm:FH_PI_ALP}
The following inequality holds:
\begin{align}
\label{eq:FH_PI_ALP_4}
J_{k,\widetilde{\pi}}(x) \leq J_{k,\pi}(x) +(N-k)\beta, ~\forall ~x \in S ~\text{and} ~k.
\end{align}
%where, the error term $\beta :=\max\{\beta_N, \beta_{N-1} \dots\beta_{k+1},\dots\beta_1\}$.
\end{theorem}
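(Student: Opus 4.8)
The natural approach is backward induction on the stage index $k$, running from $k=N$ down to $k=0$, exactly mirroring the backward sweep of Algorithm~\ref{alg:FH_PI_ALP}. The base case $k=N$ is immediate from \eqref{eq:FH_PI_ALP_3}, since $J_{N,\widetilde{\pi}}(x)=J_{N,\pi}(x)=g_N(x)$ and $(N-k)\beta=0$. For the inductive step, I would assume \eqref{eq:FH_PI_ALP_4} holds at stage $k+1$, i.e. $J_{k+1,\widetilde{\pi}}(x)\le J_{k+1,\pi}(x)+(N-k-1)\beta$ for all $x$ (this is precisely \eqref{eq:FH_PI_ALP_2}), and then propagate it one stage back through the dynamic-programming recursion for the finite-horizon cost under a fixed policy.

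The key chain of inequalities I would write is, for each $x\in S$:
\begin{align}
J_{k,\widetilde{\pi}}(x)
&= \sum_{y} p_{xy}(\widetilde{\mu}_k(x))\big[g_{xy}(\widetilde{\mu}_k(x)) + J_{k+1,\widetilde{\pi}}(y)\big] \nonumber\\
&\le \sum_{y} p_{xy}(\widetilde{\mu}_k(x))\big[g_{xy}(\widetilde{\mu}_k(x)) + J_{k+1,\pi}(y)\big] + (N-k-1)\beta \nonumber\\
&\le \sum_{y} p_{xy}(\mu_k(x))\big[g_{xy}(\mu_k(x)) + J_{k+1,\pi}(y)\big] + (N-k-1)\beta + \beta \nonumber\\
&= J_{k,\pi}(x) + (N-k)\beta. \nonumber
\end{align}
The first equality is the one-stage cost recursion for policy $\widetilde{\pi}$ (with discount $\alpha=1$ in the FHTC case); the first inequality uses the induction hypothesis together with the fact that $\sum_y p_{xy}(\cdot)=1$; the last equality is the recursion for $\pi$. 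The crucial middle inequality is where the decentralized policy-improvement property of Algorithm~\ref{algo:DPI} must be invoked: because $\widetilde{\mu}_k$ was obtained agent-by-agent as a minimizer of $\sum_y p_{xy}(\cdot)[g_{xy}(\cdot)+J^{\text{ALP}}_{k+1,\pi}(y)]$, one has the greedy inequality $\sum_y p_{xy}(\widetilde{\mu}_k(x))[g_{xy}(\widetilde{\mu}_k(x))+J^{\text{ALP}}_{k+1,\pi}(y)] \le \sum_y p_{xy}(\mu_k(x))[g_{xy}(\mu_k(x))+J^{\text{ALP}}_{k+1,\pi}(y)]$, and then one replaces $J^{\text{ALP}}_{k+1,\pi}$ by $J_{k+1,\pi}$ on both sides at the cost of $\beta_{k+1}\le\beta$, using the definition $\beta_{k+1}=\max_x|Err_{k+1}(x)|$ and $Err_{k+1}(x)=J_{k,\pi}(x)-J^{\text{ALP}}_{k,\pi}(x)$ (with the index conventions of \eqref{eq:FH_PI_ALP_1}).

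The main obstacle, and the step deserving the most care, is justifying that the agent-by-agent minimization in Algorithm~\ref{algo:DPI} actually yields the global greedy inequality against the \emph{base} joint policy $\mu_k$ — i.e. that sequential unilateral improvements do not worsen the joint one-stage-plus-cost-to-go objective. This is the cooperative-MDP monotonicity argument from \cite{Bertsekas2021}: one inserts a telescoping sequence of mixed policies $(\widetilde{\mu}^1_k,\dots,\widetilde{\mu}^{i}_k,\mu^{i+1}_k,\dots,\mu^m_k)$ for $i=0,1,\dots,m$ and observes that each consecutive pair differs only in agent $i$'s component, where $\widetilde{\mu}^i_k$ is by construction an argmin; chaining these $m$ inequalities gives the desired bound. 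A secondary, more routine point is bookkeeping the index shift between $\beta_{k+1}$, $\beta_{k+2}$ and the generic bound $\beta$, and confirming that the single accumulated $\beta$ per backward step yields exactly the $(N-k)\beta$ in \eqref{eq:FH_PI_ALP_4} rather than something larger; this follows since $(N-k-1)\beta+\beta=(N-k)\beta$.
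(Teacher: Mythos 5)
Your proposal follows essentially the same route as the paper's proof: backward induction from $k=N$, the induction hypothesis applied to $J_{k+1,\widetilde{\pi}}$, the telescoping agent-by-agent greedy inequality from \Cref{algo:DPI} evaluated against $J^{\text{ALP}}_{k+1,\pi}$, and a single $\beta$ of error accumulation per stage. The structure and all the key ideas match.

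One point in your justification is too loose, though, and it is exactly where the paper has to work hardest. You claim that replacing $J^{\text{ALP}}_{k+1,\pi}$ by $J_{k+1,\pi}$ \emph{on both sides} of the greedy inequality costs only $\beta$, citing only $\beta_{k+1}=\max_x|Err_{k+1}(x)|$. With the two-sided bound alone, the left-hand side goes up by at most $\beta$ \emph{and} the right-hand side may go down by as much as $\beta$, so you would only obtain a $2\beta$ penalty per stage and hence the weaker conclusion $J_{k,\widetilde{\pi}}(x)\le J_{k,\pi}(x)+2(N-k)\beta$. To recover the stated constant you must use the one-sided property of the ALP solution, $J^{\text{ALP}}_{k+1,\pi}(y)\le J_{k+1,\pi}(y)$ for all $y$ (equivalently $Err\ge 0$), which holds because the LP in the policy-evaluation step maximizes subject to $\Phi r \le \T_{\mu}(\cdot)$ and therefore produces a lower bound on the true cost. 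This is precisely the paper's step $(c)$: the substitution on the $\widetilde{\mu}_k$ side costs $+\beta$ (via $Err\le\beta$), while the substitution on the $\mu_k$ side is free (via $Err\ge 0$), giving $(N-k-1)\beta+\beta=(N-k)\beta$. With that one fact added, your argument is complete and coincides with the paper's.
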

\begin{proof}   
For simplicity, we prove Theorem \ref{thm:FH_PI_ALP} for the two-agent case, i.e., $m=2$. The proof for an arbitrary number of agents $m > 2$  is entirely similar. We use mathematical induction for our proof.
Clearly, (\ref{eq:FH_PI_ALP_4}) holds for stage $j=N$, as $J_{N, \widetilde{\pi}} = J_{N, \pi}=g_N$. Thus, assuming 
that (\ref{eq:FH_PI_ALP_4}) holds for stage $j=k+1$, i.e., $J_{k+1 , \widetilde{\pi}}(x) \leq J_{k+1, \pi}(x) +(N-k-1)\beta$, $\forall x$, we show that it also holds for $j=k$, thereby completing the induction step. Thus, note that 
\begin{align*}
& J_{k,\widetilde{\pi}}(x)\\
&= \sum_{y \in S} p_{xy}(\widetilde{\mu}_k^1(x), \widetilde{\mu}_k^2(x))[g_{xy}(\widetilde{\mu}_k^1(x), \widetilde{\mu}_k^2(x))+J_{k+1,\widetilde{\pi}}(y)] \\
&\stackrel{(a)}{\leq} \sum_{y \in S} p_{xy}(\widetilde{\mu}_k^1(x), \widetilde{\mu}_k^2(x))[g_{xy}(\widetilde{\mu}_k^1(x), \widetilde{\mu}_k^2(x))+J_{k+1,\pi}(y)]  + (N-k-1)\beta \\
&= \sum_{y \in S} p_{xy}(\widetilde{\mu}_k^1(x), \widetilde{\mu}_k^2(x))[g_{xy}(\widetilde{\mu}_k^1(x), \widetilde{\mu}_k^2(x))+(J^{\text{ALP}}_{k+1,\pi}(y) +Err_{k+2}(y))] + (N-k-1)\beta\\
&\leq \sum_{y \in S} p_{xy}(\widetilde{\mu}_k^1(x), \widetilde{\mu}_k^2(x))[g_{xy}(\widetilde{\mu}_k^1(x), \widetilde{\mu}_k^2(x))+ J^{\text{ALP}}_{k+1,\pi}(y)] + (N-k)\beta
\\
&\stackrel{(b)}{=} \min_{u_k^2 \in \U_k^2(x)} \sum_{y \in S} p_{xy}(\widetilde{\mu}_k^1(x), u_k^2)[g_{xy}(\widetilde{\mu}_k^1(x), u_k^2)+ J^{\text{ALP}}_{k+1,\pi}(y)]  + (N-k)\beta
\end{align*}
Following similar steps for the $1^{\text{st}}$ agent as for the $2^{\text{nd}}$ agent,
\begin{align*}
  & J_{k,\widetilde{\pi}}(x) \leq \sum_{y \in S} p_{xy}({\mu}_k^1(x), \mu_k^2(x))[g_{xy}( {\mu}_k^1(x), \mu_k^2(x))+J^{\text{ALP}}_{k+1, \pi}(y)] + (N-k)\beta \\  
 &\stackrel{(c)}{\leq} \sum_{y \in S} p_{xy}({\mu}_k^1(x), \mu_k^2(x))[g_{xy}( {\mu}_k^1(x), \mu_k^2(x))+J_{k+1,\pi}(y)] + (N-k)\beta = J_{k, \pi}(x)+ (N-k)\beta
\end{align*}

In the above, $(a)$ follows from the induction hypothesis, $(b)$ follows from (\ref{imp}) and $(c)$ follows from the property of ALP solutions that $J^{\text{ALP}}_{k,\pi}(x)\leq J_{k,\pi}(x),\forall x\in S$. %The claim now follows. 
\end{proof}
\begin{remark}
% \vspace{-10pt}
\Cref{thm:FH_PI_ALP} states that the policy achieved at each stage in \Cref{alg:FH_PI_ALP} is close to the policy obtained when exact value function is computed \cite{Bertsekas2021} up to an error bound $(N-k)\beta$. The error here depends on $\beta$ and the time gap between stage $k$ and the end of horizon. From \Cref{thm:FH_PI_ALP}, we can say that near the end of horizon, the error goes to zero, and in other stages, the error depends on the function approximator. If the function approximator gives precise representations, then the error goes to zero, and we get an approximate policy that is as good as the exact policy. 
\end{remark}

\subsection{DPI using ALP in Infinite Horizon CO-MA-MDPs}
In \Cref{alg:IH_PI_ALP}, ALP is implemented in decentralized policy iteration (DPI) for the infinite horizon CO-MA-MDPs.
\begin{algorithm}
\caption{Calculate approximate cost function: {CACFN}$(\mu)$}\label{CACFN_ALP_IH}
    \begin{algorithmic}[1]
      \STATE Input: Policy $\mu$ 
      \STATE $r_\mu= \underset{{r\in\R^d}}{\arg\max} ~c^\top \Phi r $,
      Subject to: $\T_\mu \Phi r \geq \Phi r$
     \STATE Return $J_\mu^{\text{ALP}} = \Phi r_\mu$
    \end{algorithmic}
    % \vspace{-3pt} 
\end{algorithm}
% \vspace{-12pt}
\begin{algorithm}
\caption{DPI using ALP for IH CO-MA-MDP} \label{alg:IH_PI_ALP}
\begin{algorithmic}
\STATE \textit{Input:} Base policy $\mu_0=\left(\mu_0^1, \ldots,\mu_0^m\right)$; discount factor $\alpha$.
% \vspace{-8pt}
\STATE $J^{\text{ALP}}_{\mu_0}=\text{CACFN}(\mu_0)$ (call \Cref{CACFN_ALP_IH}) 
% Approximate cost function according to base policy, $J^{\text{ALP}}_{\mu} =\text{CACFN}(\mu)$.\\ 
\FOR{$t=0,\ldots,T-1$}
\FOR{agent $i=1,2,\dots,m$}
\STATE $\widetilde{\mu}^i =\text{DPIm}(i,\tilde{\mu}_{1:i-1},\mu_{i+1:m},J^{\text{ALP}}_{\mu_t},\alpha)$ (call Algo.\ref{algo:DPI})
\ENDFOR
\STATE  $\mu_{t+1}=\widetilde{\mu}$.
\STATE 
$J^{\text{ALP}}= \text{CACFN}({\mu_{t+1}})$, (call \Cref{CACFN_ALP_IH}).
\ENDFOR
\STATE Return policy $\mu_{T}$.
\end{algorithmic}
\end{algorithm}
 In this algorithm, approximate policy evaluation is performed, unlike in \cite{Bertsekas2021,bertsekas2020}, where exact policy evaluation is done in a regular policy iteration procedure.  %We now discuss \Cref{alg:IH_PI_ALP} in more detail. 
In \Cref{alg:IH_PI_ALP}, the inputs are %the number of states ($n$) and agents ($m$). Further, 
the state space $S$ (with $n$ states), action space $\U$, number of agents ($m$) and base policy $\mu=\{\mu^1,\mu^2, \dots, \mu^m \}$. Here $\U=\U^1\times \U^2 \times \dots \U^{i-1} \times \U^{i} \times \U^{i+1}, \dots\times \U^m$, where the action component for the $i^{\text{th}}$ agent in state $x \in S$ is $u^i \in \U^i(x)$. As this algorithm is designed for infinite horizon CO-MA-MDPs, we use a discount factor $\alpha \in (0,1)$.
% \begin{algorithm}
% \caption{DPI using ALP for IH CO-MA-MDP} \label{alg:IH_PI_ALP}
% \begin{algorithmic}
% \STATE \textit{Input:} Base policy $\mu_0=\left(\mu_0^1, \ldots,\mu_0^m\right)$; discount factor $\alpha$.
% \vspace{-8pt}
% \STATE $J^{\text{ALP}}_{\mu_0}=\text{CACFN}(\mu_0)$ (call \Cref{CACFN_ALP_IH}) 
% % Approximate cost function according to base policy, $J^{\text{ALP}}_{\mu} =\text{CACFN}(\mu)$.\\ 
% \FOR{$t=0,\ldots,T-1$}
% \FOR{agent $i=1,2,\dots,m$}
% \STATE $\widetilde{\mu}^i =\text{DPI}(i,\tilde{\mu}_{1:i-1},\mu_{i+1:m},J^{\text{ALP}}_{\mu_t},\alpha)$ (call \Cref{algo:DPI})
% \ENDFOR
% \STATE  $\mu_{t+1}=\widetilde{\mu}$.
% \STATE 
% $J^{\text{ALP}}= \text{CACFN}({\mu_{t+1}})$, (call \Cref{CACFN_ALP_IH}).
% \ENDFOR
% \STATE Return policy $\mu_{T}$.
% \end{algorithmic}
% \end{algorithm}
At first, call \Cref{CACFN_ALP_IH} to get initial cost $J^{\text{ALP}}_{\mu}$ according to the base policy $\mu$. Then, we repeat the $i)$ decentralized policy improvement (DPIm) and $ii)$ approximate policy evaluation until we get the optimal policy, i.e., until the cost according to the updated policy, $J^{\text{ALP}}_{\Tilde{\mu}}$ is at least the cost according to the base policy, $J^{\text{ALP}}_{\mu}$ for all states $x \in S$. In the DPIm step, for each agent $i$, \Cref{algo:DPI} is called to get the updated policy $\Tilde{\mu}^i$ for the $i^{\text{th}}$ agent providing agent $i$, current policy $\mu$ and cost $J^{\text{ALP}}_{\mu}$ as input.
The policy update is thus carried out according to (\ref{imp}). 
%In other words, we get an updated policy component of the $i^{\text{th}}$ agent at state $x$ as follows:
%\begin{align*}
%\widetilde{\mu}^i(x) \in {\arg \min}_{u^i \in \U^i(x)} \sum_{y \in S} p_{xy}({\widetilde{\mu}}^1(x),\dots,{\widetilde{\mu}}^{i-1}(x), 
 %   \\ u^i, \mu^{i+1}(x),\dots,\mu^m(x))[g_{xy}({\widetilde{\mu}}^1(x),\dots,u^i,\dots,\mu^m(x))
 %   \\ + \alpha J_{\mu}^{\text{ALP}}(y)], \forall x \in S. 
%\end{align*}
In the approximate policy evaluation step, \Cref{CACFN_ALP_IH} is called to get the cost $J^{\text{ALP}}_{\Tilde{\mu}}$ according to the updated policy $\Tilde{\mu}$.
%Note that no intermediate value function update is used in the \Cref{alg:IH_PI_ALP} and LP is used for lower bound projection. 
In \Cref{thm:IH_PI_ALP}, we show a cost improvement property of the updated policy $\Tilde{\mu}$ when using \Cref{alg:IH_PI_ALP}.

\begin{theorem}\label{thm:IH_PI_ALP}
In Algorithm \ref{alg:IH_PI_ALP}, let $ \beta_t:=\max_{x \in S}|J_{\mu_t}(x)-J^{\text{ALP}}_{\mu_t}(x)|$. We have\\
${\displaystyle
     J_{\mu_{t+1}}(x) \leq J_{\mu_t}(x) + \frac{1}{1- \alpha} \beta_t, \mbox{ }\forall x\in S}$.
\end{theorem}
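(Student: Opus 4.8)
The plan is to follow the same skeleton as the proof of \Cref{thm:FH_PI_ALP}, with one structural change forced by the setting: the infinite-horizon problem has no terminal stage to anchor a backward induction, so instead of inducting over stages I will iterate the policy-evaluation operator $\T_{\tilde\mu}$ — which is an $\alpha$-contraction in the sup-norm with unique fixed point $J_{\mu_{t+1}}$ — and sum a geometric series. Throughout I write $\tilde\mu=\mu_{t+1}$ for the roll-out policy obtained from the base policy $\mu_t$ at iteration $t$, and $e$ for the all-ones vector on $S$; as in \Cref{thm:FH_PI_ALP} I would carry out the $m=2$ case, the general case being identical with a longer chain.

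First I would reproduce, essentially verbatim from the agent-by-agent chain of inequalities in the proof of \Cref{thm:FH_PI_ALP} (now with discount factor $\alpha$ and with $J=J^{\text{ALP}}_{\mu_t}$), the argument built into \Cref{algo:DPI}: minimality of $\tilde\mu^2$ given $\tilde\mu^1$, then substitution of $\mu_t^2$, then minimality of $\tilde\mu^1$, then substitution of $\mu_t^1$, which yields $\T_{\tilde\mu}J^{\text{ALP}}_{\mu_t}\le\T_{\mu_t}J^{\text{ALP}}_{\mu_t}$ pointwise. Next I would use the ALP lower-bound property coming from the feasibility constraint $\T_{\mu_t}\Phi r\ge\Phi r$ in \Cref{CACFN_ALP_IH}, namely $J^{\text{ALP}}_{\mu_t}\le J_{\mu_t}$, together with monotonicity of $\T_{\mu_t}$ and $\T_{\mu_t}J_{\mu_t}=J_{\mu_t}$, to get $\T_{\mu_t}J^{\text{ALP}}_{\mu_t}\le J_{\mu_t}$; combined with the definition of $\beta_t$, which gives $J_{\mu_t}\le J^{\text{ALP}}_{\mu_t}+\beta_t e$, this produces the one-step bound $\T_{\tilde\mu}J^{\text{ALP}}_{\mu_t}\le J^{\text{ALP}}_{\mu_t}+\beta_t e$. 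Then, using monotonicity of $\T_{\tilde\mu}$ and the shift identity $\T_{\tilde\mu}(J+ce)=\T_{\tilde\mu}J+\alpha c e$ for constant $c$, an easy induction on $k$ gives $\T_{\tilde\mu}^{\,k}J^{\text{ALP}}_{\mu_t}\le J^{\text{ALP}}_{\mu_t}+\big(\sum_{j=0}^{k-1}\alpha^j\big)\beta_t e$ for all $k\ge1$; letting $k\to\infty$ and invoking the contraction property ($\T_{\tilde\mu}^{\,k}J^{\text{ALP}}_{\mu_t}\to J_{\mu_{t+1}}$) together with $\sum_{j=0}^{k-1}\alpha^j\to(1-\alpha)^{-1}$ yields $J_{\mu_{t+1}}\le J^{\text{ALP}}_{\mu_t}+(1-\alpha)^{-1}\beta_t e\le J_{\mu_t}+(1-\alpha)^{-1}\beta_t e$, the last step using $J^{\text{ALP}}_{\mu_t}\le J_{\mu_t}$ once more.

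The computational content — the agent-by-agent chain in the first step — is routine given \Cref{thm:FH_PI_ALP}, so the part that needs care is the passage to the limit: with no terminal stage, the backward induction of the finite-horizon proof must be replaced by iterating the contraction $\T_{\tilde\mu}$ and appealing to its fixed-point/convergence property, and one must keep straight the two distinct consequences of the ALP gap that are used — the one-sided inequality $J^{\text{ALP}}_{\mu_t}\le J_{\mu_t}$ coming from LP feasibility, and the two-sided magnitude bound $|J_{\mu_t}-J^{\text{ALP}}_{\mu_t}|\le\beta_t$ coming from the definition of $\beta_t$ — since it is precisely their combination (one $\beta_t$ to pass from $J_{\mu_t}$ back to $J^{\text{ALP}}_{\mu_t}$, then a geometric accumulation of $\beta_t$ under iteration) that produces the $(1-\alpha)^{-1}$ factor.
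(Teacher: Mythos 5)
Your proof is correct and follows essentially the same route as the paper's: the agent-by-agent minimality chain from \Cref{algo:DPI}, combined with the ALP lower-bound property $J^{\text{ALP}}_{\mu_t}\le J_{\mu_t}$ and the definition of $\beta_t$, yields a one-step error bound that is then propagated by iterating the contraction $\T_{\widetilde{\mu}}$ and summing the geometric series. The only (cosmetic) difference is that you anchor the one-step bound at $J^{\text{ALP}}_{\mu_t}$ rather than at $J_{\mu_t}$, which in fact tidies up the paper's step $(a)$, where $J_{\mu_t}=J^{\text{ALP}}_{\mu_t}+\beta_t$ is written as an equality although only the inequality $J_{\mu_t}\le J^{\text{ALP}}_{\mu_t}+\beta_t e$ is available (and is all that is needed).
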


\begin{proof}
For the sake of notational simplicity, we denote $\mu \stackrel{\triangle}{=} \mu_{t}$, $\tilde{\mu} \stackrel{\triangle}{=} \mu_{t+1}$ and $\beta \stackrel{\triangle}{=} \beta_t$, respectively. For ease of exposition, we consider the case of two agents (i.e., $m = 2$), as the case of an arbitrary number $m$ of agents with $m > 2$ is entirely analogous. %Now, in \eqref{Bellman_T_mu_MA} 
For the case of  $m=2,\mu=\tilde{\mu}, J=J_{\mu}$, observe that 
for any $x\in S$,
\vspace{-5pt}
\begin{align*}
& \T _{\widetilde{\mu}}J_{\mu}(x) \\
&=\sum_{y=1}^n p_{xy}(\widetilde{\mu}_1(x),\widetilde{\mu}_2(x))(g_{xy}(\widetilde{\mu}_1(x),\widetilde{\mu}_2(x))+\alpha J_{\mu}(y)) \\
&\stackrel{(a)}{=} \sum_{y=1}^n p_{xy}(\widetilde{\mu}^1(x),\widetilde{\mu}^2(x))(g_{xy}(\widetilde{\mu}^1(x),\widetilde{\mu}^2(x))+\alpha (J^{\text{ALP}}_{\mu}(y)+\beta)) \\
&\stackrel{(b)}=\underset{u^2 \in \U^2(x)}{\min} \sum_{y=1}^n p_{xy}(\widetilde{\mu}^1(x),u^2)(g_{xy}(\widetilde{\mu}^1(x),u^2)+\alpha J^{\text{ALP}}_{\mu}(y)) + \alpha \beta\\
&\leq \sum_{y=1}^n p_{xy}(\widetilde{\mu}^1(x),\mu^2(x))(g_{xy}(\widetilde{\mu}^1(x),\mu^2(x))+\alpha J^{\text{ALP}}_{\mu}(y)) + \alpha \beta.
% &=\min_{u^1 \in \U^1(x)}\sum_{y=1}^n p_{xy}(u^1,\mu^2(x))(g_{xy}(u^1,\mu^2(x))+
\vspace{-15pt}
\end{align*}
Following similar steps for the $1^{\text{st}}$ agent as the $2^{\text{nd}}$ agent,
\vspace{-5pt}
\begin{align*}
% &\alpha J^{\text{ALP}}_{\mu}(y)) + \alpha \beta\\
& \T _{\widetilde{\mu}}J_{\mu}(x) \leq \sum_{y=1}^n p_{xy}(\mu^1(x),\mu^2(x))(g_{xy}(\mu^1(x),\mu^2(x))+ \alpha J^{\text{ALP}}_{\mu}(y)) + \alpha \beta\\
&\stackrel{(c)}{\leq} \sum_{y=1}^n p_{xy}(\mu(x))(g_{xy}(\mu(x))+
\alpha J_{\mu}(y)) + \alpha \beta\\ 
&= J_{\mu}(x) + \alpha \beta.  
\end{align*}
In the above, $(a)$ follows from $J_{\mu}=J^{\text{ALP}}_{\mu}+\beta$, $(b)$ follows from (\ref{imp}) and $(c)$ follows from the property of the ALP solution that $J^{\text{ALP}}_{\mu}(x)\leq J_{\mu}(x),\forall x\in S$.
Hence, $\forall x\in S$, $e=(1,1,\ldots,1)^\top$ being the vector of all elements being 1,
\begin{align*}
& \T_{\widetilde{\mu}} J_{\mu}(x) \leq J_{\mu}(x) + \alpha \beta\\
&\Rightarrow \T^2_{\widetilde{\mu}} J_{\mu}(x)\leq \T_{\widetilde{\mu}}(J_{\mu}+ \alpha \beta e)(x)\\
&\stackrel{(d)}{\Rightarrow} \T^2_{\widetilde{\mu}} J_{\mu}(x) \leq \T_{\widetilde{\mu}}J_{\mu}(x) + \alpha ^2 \beta\\
&\Rightarrow \T^2_{\widetilde{\mu}} J_{\mu} (x)\leq J_{\mu}(x) + \alpha \beta + \alpha ^2 \beta.   
\end{align*}
In the above, $(d)$ follows from (see RHS above $(d)$)
% \vspace{-10pt}  (see RHS of the inequality above $(d)$):
\begin{eqnarray*}
&\T _{\widetilde{\mu}} (J_{\mu} + \alpha \beta e)(x)
= \sum_{y=1}^{n}p_{xy}((\widetilde{\mu}_1(x),\widetilde{\mu}_2(x))\\
&(g_{xy}(\widetilde{\mu}_1(x),\widetilde{\mu}_2(x))
  + \alpha (J_{\mu}(y)+ \alpha \beta))\\
&=\T _{\widetilde{\mu}} J_{\mu}(x) + \alpha^2\beta.
\end{eqnarray*}

Iterating the above step an infinite number of times, we obtain that
$J_{\widetilde{\mu}}(x) \leq J_\mu(x) + \frac{1}{1 - \alpha} \beta$, $\forall x\in S$.
  \end{proof}
\begin{remark}  
\Cref{thm:IH_PI_ALP} demonstrates that the policy obtained using function approximation in \Cref{alg:IH_PI_ALP}, is close to the policy obtained using exact value function in \cite{Bertsekas2021} up to an error bound of $\frac{1}{1 - \alpha} \beta$. 
\end{remark}

\section{Experiments and Results}
\label{experiments_results}

In this section, we demonstrate the performance of the proposed algorithms, i.e., \Cref{alg:FH_PI_ALP} and \Cref{alg:IH_PI_ALP} on standard cooperative multi-agent MDP 
%{\footnote{{The code for our experiments is available at  \url{https://github.com/LakshmiMandal/MA_MDP_ALP}}}}.
We implement both algorithms in the cost minimization as well as reward maximization applications (both cases are equivalent). For comparison with \Cref{alg:FH_PI_ALP} / \Cref{alg:IH_PI_ALP}, we also implement the ``Dynamic Programming (DP) / Policy Iteration (PI) for FH/IH CO-MA-MDP'' using exact value function \cite{Bertsekas2021}, and ``Fully Decentralized DP/PI for FH/IH CO-MA-MDP'' with function Approximation (Fully Dec.~DP/PI) \cite{Chen2022}. %algorithms.}

\textbf{1. Cost minimization application:} We consider the Flies-Spiders application \cite{Bertsekas2021}, but in a two-dimensional (2-D) grid world (i.e., App. 1). In this 2-D grid world with $16$ states, there are two spiders (agents) and two flies (goal states), where the position of the flies is fixed and the initial position of spiders is any random location in the grid. At each time instant, spiders can move one step in one of the four directions, i.e., ``up'', ``down'', ``left'', or ``right''. 
For the edge states in this grid world, an action that can take the agent out of the grid gets a penalty but doesn't change its location. In this application, the single-stage cost is $+1$, and penalty is $+2$ if spiders collide with each other, until both the flies are caught by the two spiders. 
In this application, in \Cref{alg:FH_PI_ALP}, we implement two different horizon lengths of $N=10$ and $N=15$, respectively, and the goal is to cover all the goal states at a minimum cost.  Further, for the infinite horizon discounted cost MA-MDP problem, we convert the problem into a continuing (non-episodic) task by randomly generating the initial positions of the spiders once they eat the flies and putting back the flies in the same positions as before. Here, we 
implement \Cref{alg:IH_PI_ALP} and our objective is to minimize the total expected discounted cost of the two agents. \Cref{tbl:result_comp_fly} presents the empirical results of our proposed algorithms and corresponding comparison results on the App. 1.

\begin{table}[ht]
% \vspace{-10pt}
\centering
% \arraystretch{1.2}
\caption{Comparison Results in Flies-Spiders application.}
\label{tbl:result_comp_fly}
\vspace{-5pt}
\scalebox{1.2}{
\begin{tabular}{|c|c|c|}
\cline{1-3}
\textbf{Algorithm}  & \textbf{\# Iterations to converge} & \textbf{Time (Sec.)} \\ 
 \cline{1-3}
 \textbf{Dynamic Programming for FH CO-MA-MDP} \cite{Bertsekas2021} & 12 &   900.9    \\ \cline{1-3}
%\textbf{PI using ELP for IH MA-MDP:} & 1 &   \textbf{8.2}   \\ \cline{1-3}
\textbf{DPI using ALP for FH CO-MA-MDP (Proposed)} & 6 &  \textbf{100.4}    \\ \cline{1-3}
 \cline{1-3}
 \textbf{Regular Policy Iteration for IH CO-MA-MDP} \cite{Bertsekas2021} & 200 &   485.3    \\ \cline{1-3}
%\textbf{PI using ELP for IH MA-MDP:} & 1 &   \textbf{8.2}   \\ \cline{1-3}
\textbf{DPI using ALP for IH CO-MA-MDP (Proposed)} & 1 &  \textbf{25.6}    \\ \cline{1-3}
\end{tabular}}
% \vspace{-8pt}
\end{table}

\begin{figure}[htb]
\centering
\vspace{-44pt}
  \subfloat[]{
  \hspace{-15pt}
	\begin{minipage}[c][1\width]{
	   0.5\textwidth}
	   \centering
        \includegraphics[width=0.94\textwidth]{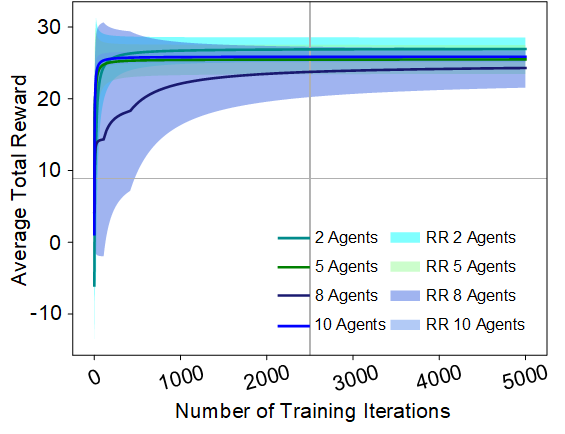}
      \vspace{-40pt}
	\end{minipage}}	
  \subfloat[]{
	\begin{minipage}[c][1\width]{
	   0.5\textwidth}
        \hspace{-25pt}
	   \centering
	   \includegraphics[width=0.94\textwidth]{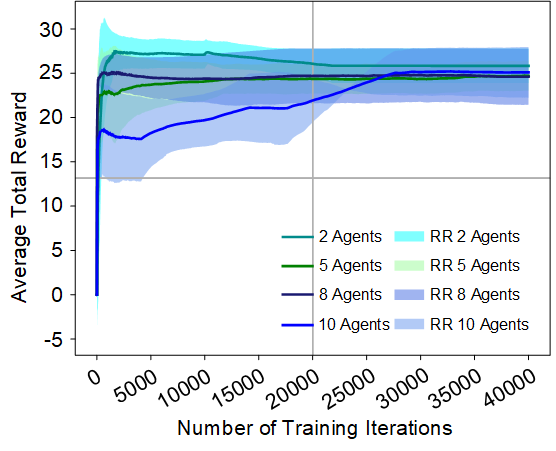}
   \vspace{-40pt}
	\end{minipage}}
\vspace{-10pt}
\caption{Performance in terms of average reward and standard deviation obtained from  $(a)$ \Cref{alg:FH_PI_ALP}$, (b) $\Cref{alg:IH_PI_ALP}, for different numbers of agents on App. 2.}
\label{plot_prop_algo}
\vspace{-5pt}
\end{figure}

\begin{figure}[htb]
\centering
\vspace{-40pt}
  \subfloat[]{
  \hspace{-15pt}
	\begin{minipage}[c][1\width]{
	   0.5\textwidth}
	   \centering
        \includegraphics[width=0.94\textwidth]{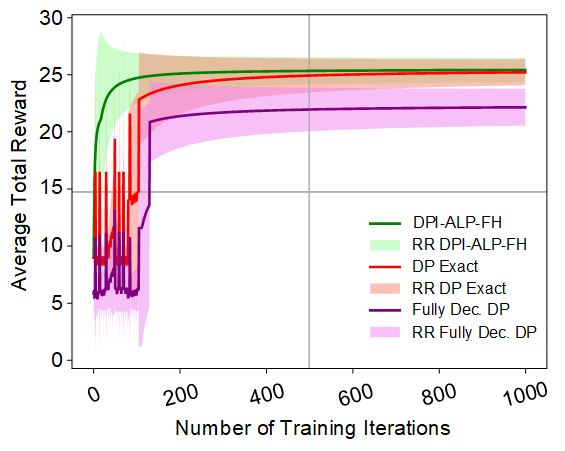}
       \vspace{-40pt}
	\end{minipage}}	
  \subfloat[]{
	\begin{minipage}[c][1\width]{
	   0.5\textwidth}
        \hspace{-25pt}
	   \centering
	   \includegraphics[width=0.94\textwidth]{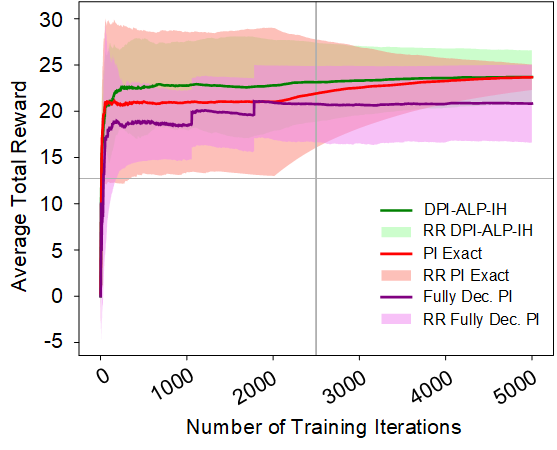}
   \vspace{-40pt}
	\end{minipage}}
\vspace{-10pt}
\caption{Performance comparisons ($m=5$) of $(a)$~\Cref{alg:FH_PI_ALP}$, (b)~  $\Cref{alg:IH_PI_ALP} with other algorithms on App. 2.}
\label{plot_comparison}
% \vspace{-20pt}
\end{figure}

\begin{figure}[htb]
\centering
\vspace{-50pt}
  \subfloat[]{
	\begin{minipage}[c][1\width]{
	   0.50\textwidth}
	   \centering
        \includegraphics[width=0.96\textwidth]{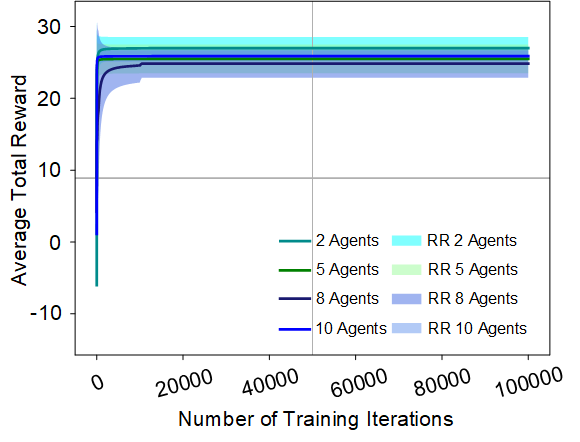}
        \vspace{-40pt}
	\end{minipage}}	
  \subfloat[]{
	\begin{minipage}[c][1\width]{
	   0.50\textwidth}
        \hspace{-30pt}
	   \centering
	   \includegraphics[width=0.96\textwidth]{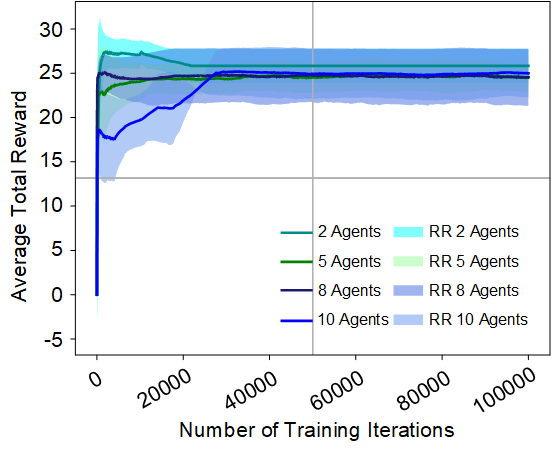}
    \vspace{-50pt}
	\end{minipage}}
 \vspace{-10pt}
\caption{Performance in terms of average reward and standard deviation obtained from  $(a)$ Algorithm 3, $(b)$ Algorithm 5 for different number of agents for 100,000 iterations on App. 2.}
\label{plot_prop_algo_100000}
% \vspace{-10pt}
\end{figure}

% 100000 iterations
\begin{figure}[htb]
\centering
\vspace{-50pt}
  \subfloat[]{
	\begin{minipage}[c][1\width]{
	   0.50\textwidth}
	   \centering
        \includegraphics[width=0.96\textwidth]{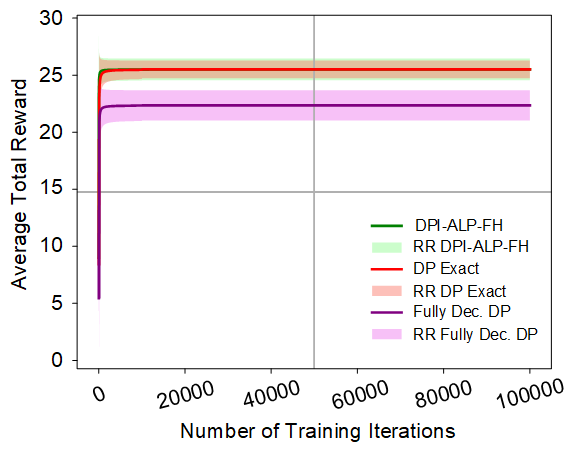}
        \vspace{-40pt}
	\end{minipage}}	
  \subfloat[]{
	\begin{minipage}[c][1\width]{
	   0.50\textwidth}
        \hspace{-30pt}
	   \centering
	   \includegraphics[width=0.96\textwidth]{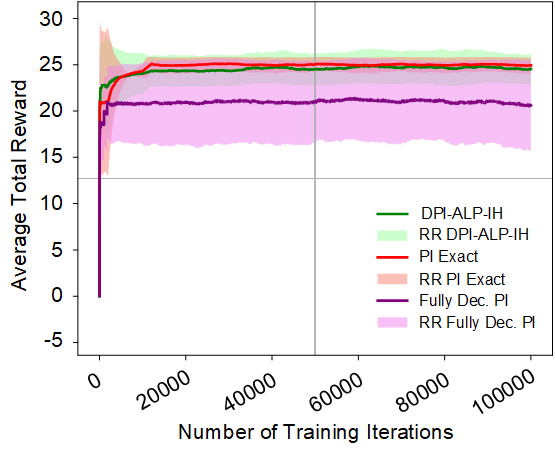}
    \vspace{-50pt}
	\end{minipage}}
 \vspace{-10pt}
\caption{{Performance comparisons ($m=5$) of $(a)$ Algorithm 3, $(b)$ Algorithm 5 with other algorithms for 100,000 iterations on App. 2.}}
\label{plot_comparison_100000}
% \vspace{-10pt}
\end{figure}

\begin{table}[htb]
% \vspace{-18pt}
% \arraystretch{2}
\centering

\caption{Performance comparisons ($m=5$) of our proposed algorithms, i.e., Algorithm 3 and Algorithm 5, with \cite{Bertsekas2021},\cite{Chen2022}, averaged across 10 independent runs on App. 2.}
\label{tbl:result_comp_new}
% \vspace{-15pt}
\scalebox{1.2}{
\begin{tabular}{|c|c|c|}
\cline{1-3}
\textbf{Algorithm}  & \textbf{Mean $\pm$ Std.} & \textbf{\# Iterations to converge} \\ 
 \cline{1-3}
 \textbf{Dynamic Programming for FH CO-MA-MDP}\cite{Bertsekas2021} &  $25.5 \pm 0.78$ &  1000 \\ \cline{1-3}
\textbf{Fully Decentralized FH CO-MA-MDP} & $22.35 \pm 1.32$ &  150    \\ \cline{1-3}
\textbf{DPI using ALP for FH CO-MA-MDP (Proposed)} & \textbf{25.5 $\pm$ 0.95} &  \textbf{100}    \\ \cline{1-3}
 \textbf{Regular Policy Iteration for IH CO-MA-MDP} \cite{Bertsekas2021} & $24.94 \pm 0.86$ &    4500   \\ \cline{1-3}
\textbf{Fully Decentralized IH CO-MA-MDP} \cite{Chen2022} & $20.62 \pm 4.85$ & 1800     \\ \cline{1-3}
\textbf{DPI using ALP for IH CO-MA-MDP (Proposed)} & \textbf{24.56 $\pm$ 1.62} &  \textbf{200}    \\ \cline{1-3}
\end{tabular}}
% \vspace{-20pt}
\end{table}
\textbf{2. Reward maximization application:} We further extended our experiments to grid world environment (App. 2) with $2500$ states and $5$ actions, i.e., go one step ``up'', ``down'', ``left'', ``right'' or ``stay'' at the same location. In these experiments, our objective is to maximize the long-term collaborative reward for $m$ agents with $m=2,5,8,10$. The implementation environment of \Cref{alg:FH_PI_ALP}  has an equal number of goal states as the number of agents and when an agent reaches its goal state, it receives a positive reward, otherwise a negative reward. Here we set $N=50$. On the other hand, \Cref{alg:IH_PI_ALP}
has some randomly generated positive reward state, i.e., if any agent enters this state, it will get a positive reward, and the reward for all other states is $0$.

We obtain ten independent runs of the algorithms using different initial seeds. The average reward (mean) and standard deviation (std.) of rewards, respectively, obtained at each time instant from the ten runs, are calculated and plotted as a dark-colored line and a light-colored reward region (RR), respectively. The performance of our algorithms during training and the comparison with existing algorithms are graphically presented in \Cref{plot_prop_algo} and \Cref{plot_comparison}, respectively.  We run $100,000$ iterations of our proposed algorithms and the algorithms used for comparison. However, we show these plots for a lower number of iterations for better visualization of the speed of convergence of each algorithm. Nonetheless, we also show, in \Cref{plot_prop_algo_100000} and \Cref{plot_comparison_100000}, the plots over 100,000 runs of all algorithms. 
 In each figure, the $x$-axis represents the number of training iterations, and the $y$-axis represents the average total reward obtained so far (calculated as explained previously). Further, in \Cref{tbl:result_comp_new}, we summarize the final values of the mean and standard error of the average reward obtained from the ten independent runs after all the algorithms have been run for 100,000 iterations as well as the average number of iterations needed for each algorithm to converge. We show this as a representative table here for the case of $m=5$.
 We make the following observations from the experiments using our algorithms on the aforementioned applications (i.e., App. 1 and App. 2):\\
    \textbf{1.} Parts $(a)$--$(b)$ of \Cref{plot_prop_algo} show that the average total reward of \Cref{alg:FH_PI_ALP} and \Cref{alg:IH_PI_ALP}, respectively, converges for all four considered values of $m$.\\ 
    \textbf{2.} Tables \ref{tbl:result_comp_fly}--\ref{tbl:result_comp_new} and parts $(a)$--$(b)$ of \Cref{plot_comparison} demonstrate that our proposed algorithms  converge faster than those of \cite{Bertsekas2021} and \cite{Chen2022}. 
    \Cref{tbl:result_comp_fly} shows that our proposed algorithm for the finite horizon setting is nearly 9 times faster than DP. Further, our algorithm for the infinite horizon setting is more than 19 times faster than regular PI.\\
    \textbf{3.} Table \ref{tbl:result_comp_new}, parts $(a)$--$(b)$ of Fig. \ref{plot_comparison} show that the proposed algorithms achieve equal or higher average total reward compared to exact value \cite{Bertsekas2021}/fully dec. CO-MA-MDP\cite{Chen2022}.\\
    \textbf{4.} Because of function approximation in our algorithms, unlike \cite{Bertsekas2021}, when 10 agents ($m=10$), 2500 states ($\mid \S \mid=2500$), and 30 basis functions ($d=30$) are used, we get a dimensionality reduction factor $\frac{{\mid \S \mid} ^m}{d}= 3.2*10^{32}$.

\section{Conclusions}
\label{conclusions}
In this paper, we consider multi-agent Markov decision processes (MA-MDP) with cooperative agents under two different settings: (a) finite horizon and (b) infinite horizon, respectively. We separately present a novel algorithm for each of the two settings where we use {approximate linear programming for decentralized policy iteration in CO-MA-MDP}. We prove the convergence of both our proposed algorithms and show the applications of these to standard cooperative MA-MDP tasks. Our experimental results confirm that our algorithms outperform competing algorithms in the literature, involving significantly fewer iterations while providing better performance.
%We observe significant improvement in computational time performance when using our algorithms over classical numerical (full-state) procedures.
\vspace{-5pt}

\bibliographystyle{IEEEtran}
\bibliography{MARL_PI_LP}
% \begin{thebibliography}{1}

% \end{thebibliography}

% that's all folks
\end{document}